\documentclass{article} 
\usepackage{iclr2026_conference,times}
\usepackage{multirow}   
\usepackage{booktabs}   
\usepackage{graphicx}   
\usepackage{mathtools}
\usepackage{amsmath}
\usepackage{amsthm}
\usepackage{wrapfig}
\usepackage{enumitem}
\newtheorem{theorem}{Theorem}
\newtheorem{theorem_app}{Theorem}
\newtheorem{corollary}{Corollary}
\newtheorem{corollary_app}{Corollary}

\newtheorem{lemma}{Lemma}


\usepackage{amsmath,amsfonts,bm}









\def\eqref#1{equation~\ref{#1}}









\def\1{\bm{1}}










\DeclareMathAlphabet{\mathsfit}{\encodingdefault}{\sfdefault}{m}{sl}
\SetMathAlphabet{\mathsfit}{bold}{\encodingdefault}{\sfdefault}{bx}{n}













\usepackage{hyperref}
\usepackage{url}

\usepackage{xcolor}
\newcommand{\best}[1]{\textcolor{red}{\textbf{#1}}}
\newcommand{\second}[1]{\textcolor{blue}{\underline{#1}}}

\usepackage{color,soul}

\title{InvarGC: Invariant Granger Causality for\\ Heterogeneous Interventional Time Series\\ under Latent Confounding}
\iclrfinalcopy


\author{Ziyi Zhang$^{1}$, Shaogang Ren$^{2}$, Xiaoning Qian$^{1,3}$, Nick Duffield$^{1}$\\
$^{1}$Texas A\&M University $^{2}$University of Tennessee at Chattanooga\\ $^{3}$Brookhaven National Laboratory\\
\texttt{\{zyzhang, xqian, duffieldng\}@tamu.edu},
\texttt{shaogang-ren@utc.com}
}

%

\begin{document}

\maketitle

\begin{abstract}
Granger causality is widely used for causal structure discovery in complex systems from multivariate time series data. Traditional Granger causality tests based on linear models often fail to detect even mild non-linear causal relationships. Therefore, numerous recent studies have investigated non-linear Granger causality methods, achieving improved performance. However, these methods often rely on two key assumptions: causal sufficiency and known interventional targets. Causal sufficiency assumes the absence of latent confounders, yet their presence can introduce spurious correlations. Moreover, real-world time series data usually come from heterogeneous environments, without prior knowledge of interventions. Therefore, in practice, it is difficult to distinguish intervened environments from non-intervened ones, and even harder to identify which variables or timesteps are affected. To address these challenges, we propose Invariant Granger Causality (InvarGC), which leverages cross-environment heterogeneity to mitigate the effects of latent confounding and to distinguish intervened from non-intervened environments with edge-level granularity, thereby recovering invariant causal relations. In addition, we establish the identifiability under these conditions. Extensive experiments on both synthetic and real-world datasets demonstrate the competitive performance of our approach compared to state-of-the-art methods.
\end{abstract}

\section{Introduction}
Granger causality has been widely used to uncover causal relationships in time series data in various real-world applications, including finance, healthcare, and retail pricing. Traditional Granger causality tests based on linear models often struggle to capture even subtle non-linear causal dependencies~\citep{granger1969investigating}. With the emergence and advancement of neural networks~\citep{lecun2015deep}, significant research efforts have been dedicated to improving Granger causality methods to account for non-linearities~\citep{khanna2019economy, marcinkevivcs2021interpretable, tank2021neural, cheng2023cuts, zhou2024jacobian, cheng2024cuts+, zhang2024learning,han2025root}.

Although non-linear Granger causality provides a more flexible framework for capturing complex causal relationships, most existing methods for learning Granger causality still struggle due to their reliance on the assumption of causal sufficiency~\citep{pearl2009causality, perry2022causal, wang2023causal, reddy2024detecting}. When latent confounding presents and causal sufficiency does not hold, these methods fail to accurately identify Granger causality, as they do not account for hidden variables that may influence multiple observational variables \citep{geiger2015causal, malinsky2018causal}. This limitation underscores the need for more advanced approaches that can effectively handle latent confounding and infer Granger causality in their presence. 

An even greater challenge is inferring Granger causality under both latent confounding and unknown interventions, where only observational time series data collected from multiple environments are available, without labels indicating whether an environment is intervened or non-intervened, nor which variables and time steps are targeted by interventions. Existing methods either assume that the time series are stationary and thus overlook interventions~\citep{huang2020causal}, rely on labels indicating which environments are non-intervened~\citep{han2025root}, or assume that the timing and targets of interventions are known~\citep{gao2022idyno, liu2023causal}. However, these assumptions are often unrealistic in real-world scenarios~\citep{squires2023causal, mameche2024identifying}. In practice, it is difficult to distinguish intervened environments from non-intervened ones, and to identify which variables are targeted and when~\citep{brouillard2020differentiable, perry2022causal}. Latent confounders further complicate intervention identification, as they may induce spurious intervention-like effects or hide actual intervention signals. Addressing these challenges requires methodologies that, without any prior knowledge of interventions, can distinguish intervened from non-intervened environments while mitigating latent confounding to enable reliable Granger causality inference.

In this paper, we propose Invariant Granger Causality (InvarGC), a novel framework that operates on heterogeneous interventional time series data under latent confounding, which leverages environmental heterogeneity to both distinguish intervened environments from non-intervened ones and mitigate latent confounding, thereby recovering invariant causal relations across environments. The main contributions of this work can be summarized as follows: 
\begin{itemize}
    \item \textbf{Problem Formulation.} We reformulate the framework of Granger causality to explicitly address the challenges posed by latent confounding and unknown interventions, which are prevalent in real-world time series but largely overlooked in existing methods.
    \item \textbf{Methodology.} We propose InvarGC, which leverages environmental heterogeneity to simultaneously mitigate latent confounding, distinguish intervened from non-intervened environments, identify edge-level interventions, and recover invariant causal structures.
    \item \textbf{Theoretical Guarantee.} We establish identifiability results for InvarGC, showing that the Granger causal graph, latent confounder subspace, and edge-/node-level interventions can be consistently recovered under appropriate assumptions.
    \item \textbf{Empirical Validation.} Extensive experiments on synthetic and real-world datasets demonstrate that InvarGC outperforms robust baselines in accuracy and interpretability, even under latent confounding and unknown interventions.
\end{itemize}
\section{Related Work}
\textbf{Granger Causality-based Methods.}
Linear Granger causality-based methods are mostly built upon regularized vector autoregressive (VAR) models~\citep{granger1969investigating}. \cite{arnold2007temporal}~first introduced Granger causality inference using LASSO~\citep{tibshirani1996regression}, while \cite{tank2021neural}~extended this approach to non-linear setting through a sparse-input MLP and LSTM. \cite{khanna2019economy}~leveraged the efficiency of the economical statistical recurrent unit (eSRU) architecture, incorporating Group-LASSO~\citep{yuan2006model} regularization at the input layer. \cite{marcinkevivcs2020interpretable}~proposed a generalized vector autoregressive (GVAR) model that improves neural network interpretability with sign detection. \cite{cheng2023cuts,cheng2024cuts+}~addresses a challenging setting and developed approaches to infer Granger causality from irregular time series data. \cite{zhou2024jacobian}~introduced a Jacobian regularizer-based framework that constructs a single efficient model to predict all target variables simultaneously. \cite{han2025root}~proposed an encoder-decoder architecture that effectively leverages anomalous time series data to infer Granger causality.

\textbf{Methods for Latent Confounding and Interventions.}
When causal sufficiency does not hold, general algorithms such as the Fast Causal Inference (FCI) family~\citep{spirtes2000causation, zhang2008completeness, colombo2012learning, ogarrio2016hybrid} and optimization-based approaches~\citep{chandrasekaran2010latent} can detect confounding in limited contexts. In the time series domain, tsFCI~\citep{entner2010causal} adapts FCI to sliding time windows, SVAR-FCI~\citep{malinsky2018causal} incorporates stationarity assumptions to refine edge orientation, and LPCMCI~\citep{gerhardus2020high} extends PCMCI~\citep{runge2019detecting} to account for hidden confounders, providing clearer causal interpretations. In the context of interventions, prior work in static settings has investigated combining observational and interventional data~\citep{yang2018characterizing} and analyzing perfect or imperfect interventions with known or unknown targets~\citep{brouillard2020differentiable}. More recent studies further extend these efforts to non-stationary time series, including latent intervened domain recovery~\citep{liu2023causal} and joint learning from observational and interventional time series~\citep{gao2022idyno}, yet none of existing methods jointly address latent confounders and unknown interventions.

\section{Preliminaries}
\subsection{Non-linear Granger Causality}
Granger causality was originally defined for linear relationships within vector autoregressive processes (VAR) to model causal relationships in multivariate time series data. More recently, methods to capture non-linear Granger causality have been developed using neural network. Consider a stationary multivariate time series $X=\{X_1,\ldots,X_T\}$ with $T$ timesteps, where each $X_t \in \mathbb{R}^{d}$. Assume that causal relationships between variables are given by the following structural model:
\begin{equation}
    X_{t+1}^{i} = f_i(X^{1}_{1:t},\ldots,X^{d}_{1:t}) \hspace{0.2cm} \text{for} \hspace{0.1cm} 1 \leq i \leq d,
\end{equation}
where $f_i$ is a function that specifies how the past values are mapped to time series $i$. Time series $j$ is Granger non-causal for time series $i$ if $f_i(X^{1}_{1:t},\ldots,X^{j}_{1:t},\ldots,X^{d}_{1:t}) = f_i(X^{1}_{1:t},\ldots,\tilde{X}^{j}_{1:t},\ldots,X^{d}_{1:t})$ for all $X^{1}_{1:t},\ldots,X^{d}_{1:t}$ and all $X_{1:t}^{j} \neq \tilde{X}_{1:t}^{j}$.

\subsection{Structural Causal Model with Latent Confounders in Time Series}
The standard linear structural causal models (SCMs) for a set of observed variables $X$, assuming no latent confounders, can be described as $X = W^{\top}X + \epsilon$, where $W$ is the weighted adjacency matrix encoding the causal relationships among variables in $X$, and each noise term $\epsilon_{i}$ is assumed to be independent of the parents $\text{PA}(X_i)$ of variable $X_i$. If we further assume no instantaneous effects in time series, the SCMs can be adapted into a first-order vector autoregressive form, $X_{t+1} = W^{\top} X_{t} + \epsilon_{t+1}$, where $W$ represents the weighted adjacency matrix that captures the causal relationships from $X_{t}$ to $X_{t+1}$. To incorporate latent confounders in time series, we reformulate $X_{t}$ as $(X_t,Z_t)$, and
\begin{equation}\label{eq:data_gen}
    \begin{pmatrix} X_{t+1} \\ Z_{t+1} \end{pmatrix} = 
    \begin{pmatrix} 
    W_{X_{t+1}X_{t}}^{\top} & W_{X_{t+1}Z_{t}}^{\top} \\ 
    0 & W_{Z_{t+1}Z_{t}}^{\top} 
    \end{pmatrix} 
    \begin{pmatrix} X_{t} \\ Z_{t} \end{pmatrix} +
    \begin{pmatrix} \epsilon_{t+1}^X \\ \epsilon_{t+1}^Z \end{pmatrix},
\end{equation}
where we assume $\text{PA}(Z_{t+1}) = Z_{t}$ and $W_{Z_{t+1}X_{t}} = 0$. This yields the component-wise equations,
\begin{align}
    X_{t+1} &= W_{X_{t+1}X_{t}}^{\top}X_{t} + W_{X_{t+1}Z_{t}}^{\top}Z_{t} + \epsilon_{t+1}^{X}, \label{eq:x_gen}\\
    Z_{t+1} &= W_{Z_{t+1}Z_{t}}^{\top}Z_{t} + \epsilon_{t+1}^{Z}. \label{eq:z_gen}
\end{align}
Here, $W_{X_{t+1}X_{t}}$ encodes the Granger causality in multivariate time series data $X$, while $W_{X_{t+1}Z_{t}}$ captures the causal relationships from $Z_{t}$ to $X_{t+1}$ and $W_{Z_{t+1}Z_{t}}$ is a diagonal matrix. To flexible this framework, we generalize Eqs.(\ref{eq:x_gen}–\ref{eq:z_gen}) to accommodate non-linear settings as follows,
\begin{align}
    X_{t+1} &= f_{X}(W_{X_{t+1}X_{t}}^{\top}X_{t} + W_{X_{t+1}Z_{t}}^{\top}Z_{t}) + \epsilon_{t+1}^{X},\\
    Z_{t+1} &= f_{Z}(W_{Z_{t+1}Z_{t}}^{\top}Z_{t}) + \epsilon_{t+1}^{Z}.
\end{align}
where $f_X$ and $f_Z$ are functions that can be selected from either linear or non-linear classes.
\subsection{Types of Interventions in Time Series}
\begin{figure}[ht]
    \centering
    \includegraphics[width=\linewidth]{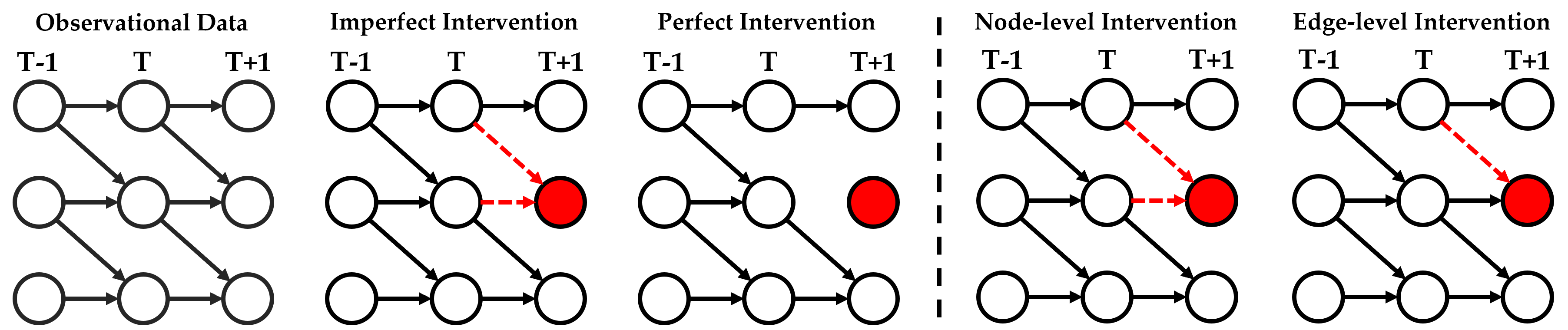}
    \caption{(Left): Imperfect intervention alter all causal relations from the parent nodes to the target node, whereas perfect intervention disconnect the target node from its parents; (Right): Node-level intervention apply to all edges, whereas edge-level intervention generalize to any subset.}
    \label{fig:types}
\end{figure}
In time series, an intervention on a variable $X_{t}^{i}$ is defined by replacing its conditional distribution $p(X_{t}^{i}|\text{PA}(X_{t}^{i}))$ with a modified distribution $\tilde{p}(X_{t}^{i}|\text{PA}(X_{t}^{i}))$, where $\text{PA}(X_{t}^{i})$ denotes its set of parents. Broadly, the types of interventions can be classified as imperfect (soft or parametric) interventions, with perfect (hard or structural) interventions being a special case in which all parental influence are entirely removed, i.e., $p(X_{t}^{i}|\text{PA}(X_{t}^{i})) = p(X_{t}^{i})$. Interventions can also be categorized by granularity, node-level interventions are applied to a target node and alter or remove all edges connecting it to its parents, and they represent a special case of edge-level interventions, which may affect either all or only a subset of parent–child relations. Both types of categorization are illustrated in Figure~\ref{fig:types}.

\section{Invariant Granger Causality}\label{mtd}
\begin{figure}[ht]
    \centering
    \includegraphics[width=\linewidth]{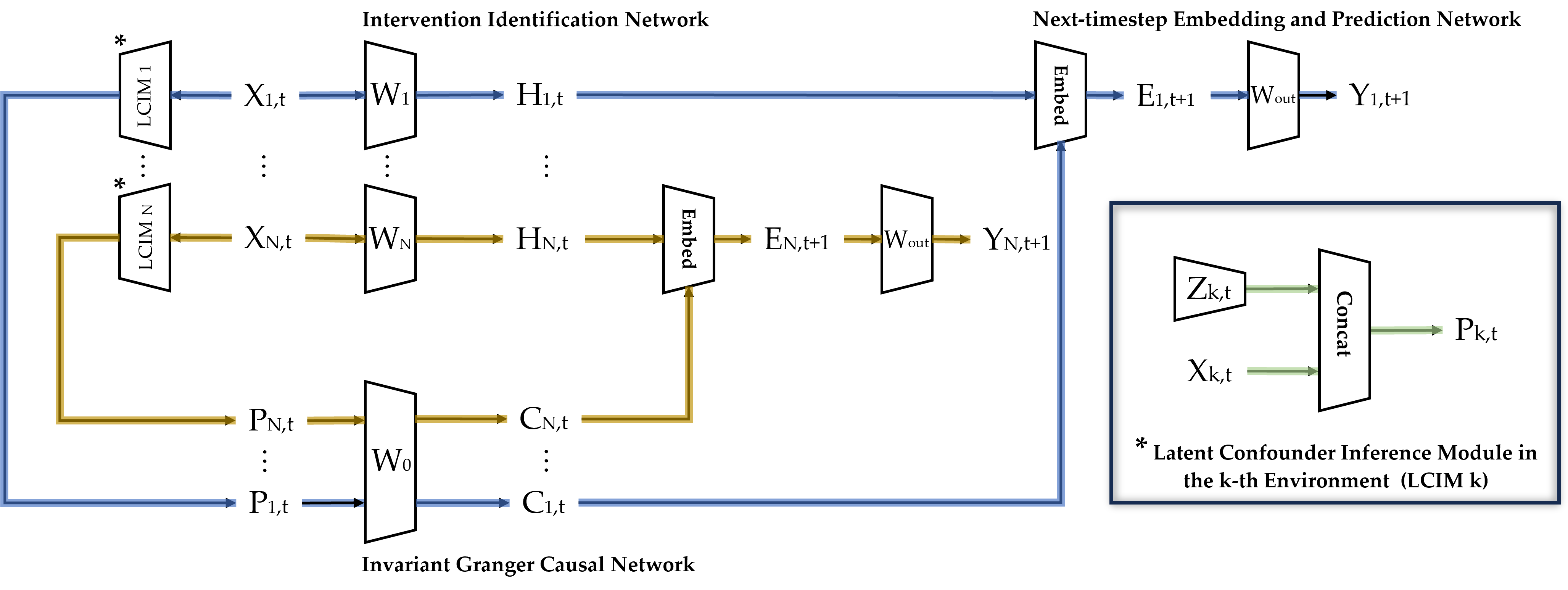}
    \caption{InvarGC is composed of latent confounder inference modules, intervention identification network, invariant Granger causal network, and next-timestep embedding and prediction network.}
    \label{fig:arch}
\end{figure}
In this section, we first introduce the InvarGC framework. In general, each environment is equipped with a latent confounder inference module (LCIM) together with an intervention identification network, while all environments share a common invariant Granger causal network. By leveraging cross-environment variation to mitigate the effects of latent confounders and distinguish interventions from purely observational settings, the framework learns an invariant Granger causal structure. We further present the corresponding optimization strategy and establish the identifiability results.
\subsection{Model Architecture}
Given heterogeneous multivariate time series data $\mathbf{X} \in \mathbb{R}^{N \times d \times T}$ collected from $N$ environments, we denote $\mathbf{X} = \{\mathbf{X}_1, \mathbf{X}_2, \dots, \mathbf{X}_N\}$, where $\mathbf{X}_k \in \mathbb{R}^{d \times T}$ represents the multivariate time series from environment $k$, consisting of $d$ observed variables over $T$ timesteps.\\
\textbf{Latent Confounder Inference Module.} For each environment $k$ and timestep $t$, with observed variables $\mathbf{X}_{k,t} \in \mathbb{R}^{d}$, we initialize a learnable vector $\mathbf{Z}_{k,t} \in \mathbb{R}^{p}$, where $p$ denotes the number of latent confounders. The resulting input vector is constructed by concatenating the observed variables with the latent confounders:
\begin{equation}
    \mathbf{P}_{k,t} = \text{LCIM}_k(\mathbf{X}_{k,t}, \mathbf{Z}_{k,t}) \in \mathbb{R}^{d+p}.
\end{equation}
\textbf{Intervention Identification Network.} In practice, interventions are generally applied to observed variables; hence for each input $\mathbf{X}_{k,t}$, the intervention identification network applies a linear projection to map it into an intervention representation:
\begin{equation}
    \mathbf{H}_{k,t} = \mathbf{W}_k\mathbf{X}_{k,t} \in \mathbb{R}^{h}.
\end{equation}
\textbf{Invariant Granger Causal Network.} All concatenated inputs $\mathbf{P}_t = \{\mathbf{P}_{1,t}, \mathbf{P}_{2,t}, \dots, \mathbf{P}_{N,t}\}$ from the LCIMs share a common invariant Granger causal network, which projects each $\mathbf{P}_{k,t}$ into an invariant causal representation:
\begin{equation}
    \mathbf{C}_{k,t} = \mathbf{W}_0\mathbf{P}_{k,t} \in \mathbb{R}^{h}.
\end{equation}
\textbf{Next-timestep Embedding and Prediction Network.}
To generate next-timestep prediction, we combine the intervention representation $\mathbf{H}_{k,t}$ with the invariant causal representation $\mathbf{C}_{k,t}$, and apply an embedding function $\text{Embed}(\cdot)$ to capture potential nonlinear interactions between them:
\begin{equation}
    \mathbf{E}_{k,t+1} = \text{Embed}(\mathbf{H}_{k,t}, \mathbf{C}_{k,t}) \in \mathbb{R}^{h'}.
\end{equation}
The resulting embedding $\mathbf{E}_{k,t+1}$ is linearly projected to generate the next-timestep prediction for the $k$-th environment:
\begin{equation}
    \hat{\mathbf{Y}}_{k,t+1} = \mathbf{W}_{\text{out}}\mathbf{E}_{k,t+1} \in \mathbb{R}^{d}.
\end{equation}
Figure \ref{fig:arch} illustrates the overall architecture and workflow of the proposed framework.

\subsection{Optimization}
In the optimization for model training, we first define $P_{k,t} = [X_{k,t}, Z_{k,t}] \in \mathbb{R}^{d+p}$ as the concatenation of observed variables $X_{k,t}$ and latent confounders $Z_{k,t}$. For each variable $i$, we denote by $W_0^{i} \in \mathbb{R}^{1 \times (d+p)}$ the invariant weight vector mapping $P_{k,t}$ to $X_{k,t+1}^i$, thereby capturing time-lagged causal dependencies that remain invariant across environments. In addition, for each environment $k$, we define $W_k^{i} \in \mathbb{R}^{1 \times d}$ as the edge-level intervention weight vector from $X_{k,t}$ to $X_{k,t+1}^i$. With these definitions in place, we propose a unified loss function in Eq.(\ref{eq:ll}) that jointly optimizes over $W$ and $Z$:
\begin{equation}\label{eq:ll}
\begin{aligned}
    \min_{W,Z} &\sum_{k=1}^N \sum_{i=1}^d \sum_{t=1}^T \big\| X_{k,t+1}^{i} - (W_{0}^{i}P_{k,t} + W_{k}^{i}X_{k,t}) \big\|^2_2 + \lambda_z \sum_{k=1}^N \sum_{l=1}^L \sqrt{ \frac{1}{T}\sum_{t=1}^T Z^2_{k,l,t}}\\ 
    &+ (1 - \alpha) \sum_{i=1}^d\sum_{j=1}^{d+p} \big\| (W_{0,j}^i, W_{1,j}^i,...,W_{N,j}^i) \big\|_2 +\alpha\sum_{k=1}^N \sum_{i=1}^d \sum_{j=1}^{d}\big\| W_{k,j}^i \big\|_2,
\end{aligned}
\end{equation}
where $L$ is the number of latent confounders to be inferred, $\lambda_z > 0$ is a regularization parameter that penalizes the latent confounder matrix, and $\alpha \in (0,1)$ balances sparsity across groups and within groups. The proposed formulation simultaneously mitigates the effects of latent confounders through the design of $Z$, identifies edge-level interventions $W_{k} \in \mathbb{R}^{d \times d}$ for each environment, and estimates the invariant Granger causal graph $W_{0,X_{t+1}X_{t}} \in \mathbb{R}^{d \times d}$, i.e.,the observed-to-observed submatrix of $W_{0} \in \mathbb{R}^{d \times (d+p)}$. To extend the formulation to non-linear relationships, we assume the existence of functions $f_i(\cdot)$, $g_{k,i}(\cdot)$, and $h_{k,i}(\cdot)$ such that: $\mathbb{E}[X_{k,t+1}^{i}|\text{PA}(X_{k,t+1}^{i}), Z_t] = h_{k,i}\big(f_{i}(P_{k,t}),g_{k,i}(X_{k,t})\big),$ 
where $f_i(\cdot)$ denotes the invariant non-linear function that generates the $i$-th variable from its observed causal parents and latent confounders, consistently across all environments, $g_{k,i}(\cdot)$ encodes edge-level interventions acting on variable $i$ in environment $k$, and $h_{k,i}(\cdot)$ acts as an aggregation function, integrating the invariant mechanism $f_i(\cdot)$ with the intervention component $g_{k,i}(\cdot)$. The overall objective is then reformulated as:
\begin{equation}\label{eq:nl}
\begin{aligned}
    \min_{W,Z} &\sum_{k=1}^N \sum_{i=1}^d \sum_{t=1}^T \big\| X_{k,t+1}^{i} - h_{k,i}\big(f_i(P_{k,t};W_{0}^{i}) ,g_{k,i}(X_{k,t};W_{k}^{i})\big) \big\|^2_2 + \lambda_z \sum_{k=1}^N \sum_{l=1}^L \sqrt{ \frac{1}{T}\sum_{t=1}^T Z^2_{k,l,t}}\\ 
    &+ (1 - \alpha) \sum_{i=1}^d \sum_{j=1}^{d+p} \big\| (W_{0,j}^i, W_{1,j}^i,...,W_{N,j}^i) \big\|_2 +\alpha\sum_{k=1}^N \sum_{i=1}^d \sum_{j=1}^{d}\big\| W_{k,j}^i \big\|_2.
\end{aligned}
\end{equation}
We optimize the model architecture illustrated in Figure~\ref{fig:arch} using Eq.(\ref{eq:nl}). Specifically, $H_{k,t}^{i}$ is obtained from the intervention identification network $g_{k,i}(X_{k,t}; W_{k}^{i})$, while $C_{k,t}^{i}$ is derived from the invariant Granger causal network $f_i(P_{k,t}; W_{0}^{i})$. These two representations are combined by the next-timestep embedding function $h_{k,i}(\cdot)$ to produce $E_{k,t+1}^i$, which is then projected to predict $X_{k,t+1}^{i}$ through the next-timestep prediction network. After optimization, non-zero entries in the observed-variable block of $f_i(\cdot)$ correspond to invariant causal relations to variable $i$ that are shared both across environments and over time, whereas $g_{k,i}(\cdot)=0$ indicates that no incoming edge to variable $i$ is intervened in environment $k$.
\subsection{Identifiability}\label{id-main}
\begin{theorem}[Identifiability of Granger Causal Graph]\label{thm:g-id}
Assume that the following conditions hold:
\begin{enumerate}[label=\textbf{(A\arabic*)},ref=\arabic*]
    \item \label{assump:a1} The data are generated from the structural model in Eq.(\ref{eq:data_gen}) across $N$ environments.
    \item \label{assump:a2} The Granger causal graph structure encoded by the support of $W_{0,X_{t+1}X_{t}}$ is invariant across all environments. Modular interventions target only $X_{t}\!\to\!X_{t+1}$ edges.
    \item \label{assump:a3} Latent confounders are exogenous, i.e., $W_{0,Z_{t+1}X_{t}}=0$, meaning past observed variables do not cause future latent changes. Moreover, the latent-to-observed mechanism $Z_{t}\!\to\!X_{t+1}$ is non-intervenable and its parameters are invariant across environments.
    \item \label{assump:a4} The model satisfies faithfulness (no parameter cancellations) and interventions are sufficiently diverse to distinguish true causal parents from non-parents.
\end{enumerate}
Then, for each variable $X_{t+1}^i$, its Granger causal parent set $\mathrm{PA}(X_{t+1}^i) \subseteq \{X^1_{t}, \dots, X^d_{t}\}$ is the unique minimal predictor set from the observed variables that remains invariant across all environments. At the population optimum of the objective in Eq.(\ref{eq:ll}), the proposed model recovers the true Granger causal graph among the observed variables by restricting its discovered dependencies to that set:
\[
\{\, j \in \{1, \dots, d\}:\ \|(W_{0,j}^i, W_{1,j}^i, \dots, W_{N,j}^i)\|_2 > 0 \,\} = \mathrm{PA}(X^i_{t+1}).
\]
\end{theorem}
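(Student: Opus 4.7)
The plan is to prove the identifiability result in three stages: (i) characterize the population-level optimum of the objective in Eq.~(\ref{eq:ll}), (ii) establish that the inferred latent-confounder subspace and the latent-to-observed mechanism are identified cleanly enough that they do not contaminate the observed-to-observed block, and (iii) combine invariance across environments with faithfulness to show that the group-sparsity pattern on $(W_{0,j}^i, W_{1,j}^i, \dots, W_{N,j}^i)$ exactly recovers $\mathrm{PA}(X_{t+1}^i)$.

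First, I would rewrite the squared-error part of the population objective as a bias--variance decomposition around the Bayes-optimal predictor $\mathbb{E}[X_{t+1}^i \mid X_t, Z_t]$ in each environment. Under A1--A3, this conditional mean admits the additive form $W_{0,X_{t+1}X_t}^{i\top}X_t + W_{0,X_{t+1}Z_t}^{i\top}Z_t + \Delta_k^i(X_t)$, where $\Delta_k^i$ encodes the modular intervention on the incoming $X_t\!\to\!X_{t+1}$ edges in environment $k$. Any minimizer of the reconstruction loss must therefore match this mean across every $k$, and the two penalty terms (confounder smoothness and group-$\ell_{2,1}$ sparsity) then act only as tie-breakers within that optimum set.

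Second, I would establish that the latent subspace is identifiable up to an ambiguity confined to the latent block. Because A3 forbids $X_t\!\to\!Z_{t+1}$ edges and mandates an invariant $Z\!\to\!X$ mechanism, any alternative latent trajectory $\tilde Z$ that reproduces the predictive distribution in every environment must have the same column span (after re-parameterization) as the true $Z$ contribution; the temporal ridge penalty $\lambda_z$ fixes the scaling. The remaining rotational freedom is absorbed into $W_{0,X_{t+1}Z_t}$ and, critically, cannot leak into $W_{0,X_{t+1}X_t}$: since interventions target only observed-to-observed edges (A2) while the latent channel is intervention-invariant across $k$, any attempt to route a true observed parent through $Z$ would produce an environment-invariant contribution that contradicts the heterogeneous behavior of that parent across environments.

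With the confounding thus partitioned off, the remaining step is a standard group-LASSO support-recovery argument on the reduced problem. For $j \in \mathrm{PA}(X_{t+1}^i)$, faithfulness (A4) forbids the invariant and intervention coefficients from jointly canceling, and the diverse-interventions clause in A4 guarantees that either $W_{0,j}^i \neq 0$ (baseline mechanism) or $W_{k,j}^i \neq 0$ for some $k$ (an intervened environment), so the group norm is strictly positive. Conversely, for $j \notin \mathrm{PA}(X_{t+1}^i)$, omitting $j$ leaves the reconstruction term unchanged (by the previous stages) while strictly decreasing the group penalty, so the subgradient optimality conditions force every coordinate of the group to zero.

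I expect the main obstacle to be the second stage: ruling out the pathological case in which a true parent is absorbed into the inferred $Z$, or vice versa, an observed-to-observed influence is mis-attributed to the latent channel. This requires using all three of temporal invariance of the $Z\!\to\!X$ map (A3), faithfulness (A4), and the heterogeneity induced by interventions (A2) simultaneously: invariance pins down what can be latent, faithfulness rules out accidental cancellations inside the observed block, and intervention diversity provides the cross-environment contrast that breaks the remaining rotational symmetry only within the latent block. Once this disentanglement is secured, the support-recovery conclusion follows from standard KKT analysis of the group-$\ell_{2,1}$ penalty applied to the identified regression targets.
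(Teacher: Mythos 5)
Your proposal follows essentially the same route as the paper: an Invariant-Causal-Prediction-style argument that $\mathrm{PA}(X^i_{t+1})$ is the unique minimal invariant predictor set (the paper isolates this as Lemmas~\ref{lem:block} and~\ref{lem:mininv}, using A3 to block backdoors through $Z^\star_t$ and A4 to rule out both omitting a parent and adding a non-parent), combined with a population-level group-sparsity argument showing non-parent groups are zeroed and parent groups retained. Your explicit treatment of the latent/observed disentanglement inside this proof is a reasonable reorganization --- the paper instead defers it to Theorem~\ref{thm:z-id} and simply evaluates the risk at $Z^\star$ --- and your heterogeneity argument for why an intervened parent cannot be rerouted through the invariant latent channel is a nice addition.

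One concrete correction: your claim that the penalties ``act only as tie-breakers within the set of reconstruction-loss minimizers'' is not valid for the objective in Eq.~(\ref{eq:ll}) with fixed positive regularization weights, since the joint minimizer of risk plus penalty need not minimize the risk, and a group-LASSO penalty can zero out a true parent whose exclusion costs only a small amount of risk. The paper closes this by requiring the regularization parameters to lie in a non-degenerate range, i.e.\ small enough that the penalty saved by zeroing the group $\mathcal{G}_{i,j}$ is strictly less than the induced risk gap $\Delta_{i,j}>0$ for every true parent $j$; your argument needs the same beta-min-type condition to make the ``true parents are retained'' step go through.
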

\begin{proof}[Proof sketch]
Conditioning on $\mathrm{PA}(X^i_{t+1})$ and $Z_t$ blocks all backdoor paths (Assumption~\ref{assump:a3}), ensuring that the direct mechanism of $X^i_{t+1}$ remains invariant across environments (Assumption~\ref{assump:a2}), so $\mathrm{PA}(X^i_{t+1})$ is sufficient. If a true parent is excluded, Assumption~\ref{assump:a4} guarantees that invariance is violated; adding non-parents violates minimality, therefore $\mathrm{PA}(X^i_{t+1})$ is the unique minimal invariant set~\citep{peters2016causal}. In Eq.(\ref{eq:ll}), non-parents increase only the penalty while parents cannot be excluded without worsening risk, so the estimator selects precisely the true parent set $\mathrm{PA}(X^i_{t+1})$. The temporal order $t \to t+1$ fixes edge directions and removes orientation ambiguity.
\end{proof}

\begin{theorem}[Identifiability of the Latent Confounder Subspace]\label{thm:z-id}
With the same assumptions of Theorem~\ref{thm:g-id}, let $Z^\star_t \in \mathbb{R}^r$ be the ground-truth latent process, which has environment-invariant dynamics but environment-dependent marginal distributions. The data generation process for the observed variables is given by:
\[
X_{t+1} = f_{X}(W_{X_{t+1}X_{t}}^{\top}X_{t} + W_{X_{t+1}Z_{t}}^{\top}Z_{t}^\star)+\epsilon_{t+1}^{X},
\]
where the influence of the latent variables is non-degenerate, i.e., the Jacobian $\tfrac{\partial f_X}{\partial z}$ has full rank $r$ on a set of positive measure. Let the latent process $Z_t \in \mathbb{R}^p$ learned by our model by minimizing the objective in Eq.~(\ref{eq:ll}) have dimension $p \ge r$. Then, at the population optimum, the learned latent subspace is equivalent to the ground-truth latent subspace up to an invertible linear transformation. That is, there exists an invertible matrix $R\in\mathbb{R}^{r\times r}$ such that:
\[
\mathrm{span}(Z_{t}) = \mathrm{span}(Z^\star_{t}).
\]
\end{theorem}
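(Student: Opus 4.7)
The plan is to first use Theorem~\ref{thm:g-id} to fix the observed-to-observed block of the learned dynamics, then isolate the contribution of the latent process as a residual signal, and finally use the full-rank Jacobian of $f_X$ together with environment heterogeneity of $Z_t^\star$ to show that the learned latent process must span the same subspace as $Z_t^\star$, up to an invertible linear map. For Step~1, Theorem~\ref{thm:g-id} already guarantees that at the population optimum the observed-to-observed block $W_{0,X_{t+1}X_t}$ of the learned invariant weight matches the ground truth $W_{X_{t+1}X_t}$, and by Assumption~\ref{assump:a3} the mechanism $f_X$ and the latent-to-observed weight are shared across environments. Matching the conditional mean of $X_{t+1}$ then forces the identity $f_X\!\bigl(u + W_{0,X_{t+1}Z_t}^\top Z_t\bigr) = f_X\!\bigl(u + W_{X_{t+1}Z_t}^\top Z_t^\star\bigr)$ on a positive-measure set of $u = W_{X_{t+1}X_t}^\top X_t$ values, jointly across all environments.

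For Step~2, on the positive-measure set where $\partial f_X/\partial z$ has full rank $r$, the map $v\mapsto f_X(u+v)$ is locally injective on its $r$-dimensional image, so the functional identity above implies $W_{0,X_{t+1}Z_t}^\top Z_t = W_{X_{t+1}Z_t}^\top Z_t^\star$ almost surely. The environment-dependent marginals of $Z_t^\star$ ensure that $W_{X_{t+1}Z_t}^\top Z_t^\star$ sweeps out its full $r$-dimensional image as the environment varies, so the identification holds as equality of subspaces rather than at a single point. Faithfulness (Assumption~\ref{assump:a4}) is invoked to rule out parameter cancellations that could otherwise produce spurious matches.

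For Step~3, because $W_{X_{t+1}Z_t}$ has rank $r$ by the full-rank Jacobian assumption, the linear map $Z_t^\star \mapsto W_{X_{t+1}Z_t}^\top Z_t^\star$ is an injection of an $r$-dimensional subspace into $\mathbb{R}^d$, and the analogous statement holds on the learned side. Pulling back the subspace equality through these injections yields $\mathrm{span}(Z_t) = \mathrm{span}(Z_t^\star)$ and the invertible $R\in\mathbb{R}^{r\times r}$ with $Z_t = R Z_t^\star$ on this subspace. The group-sparsity penalty $\lambda_z \sum_{k,l}\sqrt{\tfrac{1}{T}\sum_t Z_{k,l,t}^2}$ in Eq.(\ref{eq:ll}) zeroes out any of the $p-r$ extra learned latent channels that are not required to match the data, so the effective dimension is exactly $r$ and $R$ is well defined.

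The hardest step will be Step~2, lifting a pointwise functional identity through $f_X$ into a global linear identity. The Jacobian rank condition only gives local injectivity, and extending this to a global identification of subspaces requires combining faithfulness with the environment-heterogeneous marginals of $Z_t^\star$ to ensure that the identity is probed on a sufficiently rich set. If the latent marginals were identical across environments, two distinct latent processes could remain observationally indistinguishable and the subspace would be unidentifiable, so the heterogeneity assumption plays an essential role here rather than merely serving as a convenient regularity condition.
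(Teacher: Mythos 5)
Your route is genuinely different from the paper's, and it has a concrete gap at its central step. The paper never inverts $f_X$: it decomposes any learned latent as $Z_t = A^\top Z_t^\star + U_t$ with $U_t \perp \mathrm{span}(Z_t^\star)$, shows via the population normal equations (after conditioning per Lemma~\ref{lem:block}) that coefficients on $U_t$ have zero cross-covariance with the residual and are therefore killed by the regularizer, and shows that any direction $a^\top Z_t^\star$ missing from $\mathrm{span}(Z_t)$ produces environment-dependent conditional laws and hence a strict risk gap $\Delta>0$ that dominates any penalty saving. Your argument instead tries to establish the exact functional identity $f_X(u+W_{0}^\top Z_t)=f_X(u+W^\top Z_t^\star)$ and then invert $f_X$. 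Two things break here. First, the identity itself is not justified: at the optimum of the \emph{penalized} population objective the learned model does not interpolate the conditional mean (shrinkage), and Theorem~\ref{thm:g-id} only delivers support recovery of $W_{0,X_{t+1}X_t}$, not exact equality of its entries, so you cannot cleanly peel off the observed block and equate residuals. Second, even granting the identity, the full-rank Jacobian hypothesis gives only \emph{local} injectivity of $v\mapsto f_X(u+v)$ on an $r$-dimensional slice, and you acknowledge yourself that you do not know how to lift the pointwise statement to the global linear identity $W_{0}^\top Z_t = W^\top Z_t^\star$ almost surely; faithfulness and heterogeneous marginals do not obviously supply the needed connectedness/monotonicity. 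A proposal whose hardest step is left unresolved is not yet a proof.

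That said, your framing has merits worth noting. You correctly identify that the environment-dependence of the marginals of $Z_t^\star$ is the essential source of identifiability (two latent processes with identical cross-environment behavior would be indistinguishable), which is exactly the role it plays in part (iii) of the paper's argument. If you want to salvage your approach, the fix is to abandon exact mean-matching and inversion of $f_X$ and instead argue at the level of risk comparisons, as the paper does: extraneous directions are orthogonal to the residual in every environment (so the group penalty on $Z$ zeroes them), while omitting a true direction forces an invariance violation with a strictly positive risk gap. That sidesteps both the shrinkage issue and the local-to-global injectivity problem entirely.
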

\begin{proof}[Proof sketch]
Since the Jacobian $\tfrac{\partial f_X}{\partial z}$ has full column rank $r$, each latent factor has a linearly independent effect on $X_{t+1}$. Thus, if a true factor is not included, environment–induced variation along that direction cannot be absorbed by the retained variables; the conditional law of $X_{t+1}$ then depends on the environment, which violates invariance. Conversely, adding latent variables beyond the true ones does not reduce population risk but increases the regularization term, so their coefficients are driven to zero at the optimum. Consequently, the learned $Z_t$ spans the same subspace as the ground-truth $Z_t^\star$, up to an invertible linear transformation~\citep{hyvarinen2019nonlinear, khemakhem2020variational}.
\end{proof}

\begin{theorem}[Identifiability of Edge-level Interventions]\label{thm:i-id}
With the same assumptions of Theorem~\ref{thm:g-id}, and given that the latent confounder subspace is correctly recovered as established in Theorem~\ref{thm:z-id}. Let $W_{0,X_{t+1}X_{t}}$ denote the invariant $X_{t}\!\to\!X_{t+1}$ mechanism and, for each environment $k$, let $W_{k}$ denote the environment-specific deviation on $X_{k,t}\!\to\!X_{k,t+1}$ (the $Z_{t}\!\to\!X_{t+1}$ mechanism has no deviation by Assumption \ref{assump:a3}). Then, for any observed edge $j \to i$ and any environment $k$,
\[
W^i_{k,j}\neq 0
\quad \Longleftrightarrow \quad
\text{the edge $j \to i$ is intervened in environment $k$}.
\]
\end{theorem}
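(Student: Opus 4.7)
The plan is to prove both directions of the equivalence by exploiting the asymmetric penalty structure in Eq.~(\ref{eq:ll}), given that Theorems~\ref{thm:g-id} and~\ref{thm:z-id} already pin down the invariant Granger coefficients $W_{0,X_{t+1}X_{t}}$ and the latent-to-observed mechanism $W_{0,X_{t+1}Z_{t}}$ (up to an invertible linear transformation in the latent subspace). I would first observe that, at the population optimum, the invariant block of $W_0$ and the latent subspace carry precisely the cross-environment invariant part of the conditional law of $X_{k,t+1}^{i}$, so any genuinely environment-specific variation has nowhere to live except in the edge-level deviation $W_{k}^{i}$.

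For the direction that an intervened edge forces $W_{k,j}^i\neq 0$, I would fix an edge $j\to i$ that is intervened in environment $k$. By the definition of an edge-level intervention, the conditional mean of $X_{k,t+1}^{i}$ along coordinate $j$ deviates from the invariant mechanism. Because $W_{0,j}^i$ is identified and shared across environments (Theorem~\ref{thm:g-id}) and the latent channel is non-intervenable (Assumption~\ref{assump:a3}), neither the invariant block nor the latent coordinates can absorb this deviation. Hence, if $W_{k,j}^{i}=0$, the environment-$k$ residual in the least-squares term of Eq.~(\ref{eq:ll}) would be bounded away from zero, contradicting population optimality. Faithfulness (Assumption~\ref{assump:a4}) excludes accidental cancellations with other coordinates that might mask this residual, and the sufficient-diversity clause of Assumption~\ref{assump:a4} guarantees that the intervention magnitude is large enough to overcome the sparsity penalty $\alpha\|W_{k,j}^{i}\|_2$.

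For the converse direction, suppose edge $j\to i$ is not intervened in environment $k$. Then the true conditional mechanism for that coordinate in environment $k$ coincides with the invariant one already captured by $W_{0,j}^{i}$. Substituting zero for $W_{k,j}^{i}$ while leaving every other parameter unchanged leaves the population prediction risk at the same value but strictly decreases the within-environment term $\alpha\|W_{k,j}^{i}\|_{2}$ and weakly decreases the cross-environment group term. Optimality therefore forces $W_{k,j}^{i}=0$, and contraposition yields the claim.

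The main obstacle I anticipate is the intrinsic non-uniqueness of the $(W_0, W_k)$ decomposition in the absence of regularization: any invariant effect could in principle be represented either in $W_0$ or as a common component shared by every $W_k$. My plan is to resolve this directly via the asymmetry of the penalty. A shared coordinate encoded in $W_0$ incurs a single group-Lasso term in the first penalty, whereas shifting the same effect into the $W_k$'s across $N$ environments pays both an inflated first-penalty contribution and $N$ separate within-environment terms; thus the penalty strictly prefers to allocate invariant structure to $W_0$ and to leave $W_{k,j}^{i}$ nonzero only when genuinely required to reduce prediction risk. Combined with the identifiability results of Theorems~\ref{thm:g-id}~and~\ref{thm:z-id}, this pins down the edge-level intervention support exactly.
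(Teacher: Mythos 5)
Your proposal is correct and takes essentially the same route as the paper's proof: the same decomposition of each environment's mechanism into the invariant coefficient $W_{0,j}^i$ plus the deviation $W_{k,j}^i$, with the forward direction argued from non-absorbability of the interventional discrepancy (via faithfulness and diversity in Assumption~\ref{assump:a4}) and the converse argued from the fact that a nonzero deviation on a non-intervened edge buys no risk reduction while paying the $\alpha$-penalty. The one substantive addition is your explicit treatment of the $(W_0,W_k)$ allocation ambiguity through the asymmetry of the two penalty terms (a shared effect placed in $W_0$ costs $(1-\alpha)|c|$ versus $(1-\alpha)|c|\sqrt{N}+\alpha N|c|$ when spread over the $W_k$'s), which the paper's proof leaves implicit and which genuinely tightens the argument.
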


\begin{proof}[Proof sketch]
The mechanism among observed variables in environment $k$ is decomposed into an invariant part $W_{0,X_{t+1}X_{t}}$ and an environment-specific deviation $W_{k}$. If edge $j\!\to\! i$ is not intervened in $k$, its true parameter equals the invariant one, so the risk is minimized by $W^i_{k,j}=0$; any nonzero deviation fails to decrease the risk and contributes only to the regularization term. If the edge is intervened, its true parameter differs from the invariant value. Under Assumption \ref{assump:a2}-\ref{assump:a4} and the parameterization in Eq.(\ref{eq:ll}), the discrepancy on $j\!\to\! i$ cannot be absorbed elsewhere; therefore the risk–minimizing solution requires $W^i_{k,j}\neq 0$.
\end{proof}

\begin{corollary}[Identifiability of Node-level Interventions]\label{cor:i-id}
Under the setting of Theorem~\ref{thm:i-id}, a node-level intervention on variable $X^i_{t+1}$ in environment $k$ is identifiable. Specifically,
\[
\{\, X^j_{t} : W^i_{k,j}\neq 0 \,\} \;=\; \mathrm{PA}(X^i_{t+1}) 
\quad \Longleftrightarrow \quad
\text{node $i$ is intervened in environment $k$},
\]
\end{corollary}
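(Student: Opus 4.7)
The plan is to derive the corollary directly from Theorems~\ref{thm:g-id} and~\ref{thm:i-id}, using only the definitional fact (recalled in Section~3.3) that a \emph{node-level} intervention on $X^i_{t+1}$ in environment $k$ is the special case of an edge-level intervention in which \emph{every} incoming edge $j\to i$ with $j\in\mathrm{PA}(X^i_{t+1})$ is simultaneously intervened. Thus once edge-level identifiability is in hand, node-level identifiability reduces to reading off when the entire set of parent-indexed coordinates of $W_k^i$ is non-zero.

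First I would fix the support using Theorem~\ref{thm:g-id}. For any non-parent $j\notin\mathrm{PA}(X^i_{t+1})$ the group-norm penalty forces $\|(W^i_{0,j},W^i_{1,j},\dots,W^i_{N,j})\|_2=0$ at the population optimum, so in particular $W^i_{k,j}=0$ for every environment $k$. Hence the set $\{j:W^i_{k,j}\neq 0\}$ is automatically a subset of $\mathrm{PA}(X^i_{t+1})$, and the only candidate indices that can be active in row $i$ of $W_k$ are the true parents.

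Next I would apply Theorem~\ref{thm:i-id} coordinate-wise within $\mathrm{PA}(X^i_{t+1})$. For each such $j$, the equivalence $W^i_{k,j}\neq 0 \Longleftrightarrow \text{edge } j\to i \text{ is intervened in }k$ holds; combining this with the previous step shows that the active set in row $i$ of $W_k$ equals exactly the set of \emph{intervened} incoming edges to $i$ in environment $k$. The biconditional claimed in the corollary then follows immediately: the active set equals all of $\mathrm{PA}(X^i_{t+1})$ if and only if every incoming edge to $i$ is intervened, which by definition is a node-level intervention.

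The main obstacle I anticipate is not algebraic but definitional, and it concerns two degenerate situations that I would need to address explicitly. If $\mathrm{PA}(X^i_{t+1})=\emptyset$, both sides of the equivalence are vacuous and a node-level intervention on $i$ cannot be distinguished through $W_k$ alone; I would either exclude this case by convention or flag it as an inherent limitation of edge-based identification. If instead some but not all parent edges are intervened in $k$, the edge-level result recovers the precise set of intervened edges but this does not constitute a node-level intervention, which the corollary correctly reflects by requiring set equality rather than mere non-emptiness. Beyond these boundary cases, no assumption or identification argument beyond those of Theorems~\ref{thm:g-id} and~\ref{thm:i-id} is needed.
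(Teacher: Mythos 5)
Your proof is correct and takes essentially the same route as the paper's: reduce a node-level intervention to the definitional statement that all incoming parent edges are intervened, then apply the edge-level equivalence of Theorem~\ref{thm:i-id} coordinate-wise. You are in fact slightly more careful than the paper, which skips the step of invoking Theorem~\ref{thm:g-id} to rule out non-parent coordinates being active and does not mention the degenerate case $\mathrm{PA}(X^i_{t+1})=\emptyset$; both of these additions are worthwhile.
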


\section{Experiments}
In this paper, we compare our InvarGC to state-of-the-art (SOTA) baselines, including GC~\citep{granger1969investigating}, NGC~\citep{tank2021neural}, eSRU~\citep{khanna2019economy}, CUTS~\citep{cheng2023cuts, cheng2024cuts+}, DyNOTEARS~\citep{pamfil2020dynotears}, GVAR~\citep{marcinkevivcs2020interpretable}, JRNGC~\citep{zhou2024jacobian}, and LPCMCI~\citep{gerhardus2020high}.

We take two established metrics: Area Under the Receiver Operating Characteristic Curve~(AUROC) and Area Under the Precision-Recall Curve (AUPRC). An AUROC value of 0.5 or lower signifies poor performance. In contexts where causal relationships are sparse, AUPRC serves as a more reliable measure of a model’s effectiveness in identifying causal relationships. This reliability is because its focus on the accurate detection of positive cases, which is crucial in situations with a limited number of true causal relationships.

\subsection{Experimental Setup}\label{exp-setup}
We begin by briefly introducing the process used to generate the synthetic data, which is based on the functional causal model~\citep{huang2020causal} described by $X_{t+1}^{i} = \sum f_{i}\big(\text{PA}(X_{t+1}^{i})\big) + \epsilon_{i,t+1},$
where $f_{i}(\cdot)$ may be specified as a linear, cubic, $\tanh$, or $\sin$ function, or alternatively as a multi-layer perceptron (MLP) network with randomly initialized weights. The noise term $\epsilon_{i,t+1}$ is generated from a standard normal distribution $\mathcal{N}(0,1)$.

\textbf{Synthetic Time Series Data.}
We generate synthetic multivariate time series data to evaluate and benchmark causal discovery performance under latent confounding and unknown interventions. The ground-truth causal graph consists of $d=5$ observed variables and $p=1$ latent confounder. The Granger summary graph over the observed variables is randomly generated with an edge probability of $e=0.3$. The latent confounder serves as a common cause to two randomly selected observed variables. 
The time series are generated following a first-order vector autoregressive process, where the state of the $d$ observed variables at time $t+1$ is determined by the state of all $d+p$ variables at $t$. To introduce non-linearity, we apply a Leaky ReLU activation with a negative slope of $0.01$. To simulate a realistic discovery setting, we generate data for three environments: two remain purely observational, while one is subject to edge-level interventions (i.e., modifying the corresponding coefficients in the parameter matrix). Importantly, all competing methods have no information about which environment is intervened.

\noindent \textbf{Real-world Time Series Data.}
We also evaluate all competing causal discovery methods on the real-world benchmarks, including \textbf{1) Tennessee Eastman Process (TEP) dataset}~\citep{downs1993plant}, a widely used benchmark in time series anomaly detection with ground-truth causal graphs. The dataset contains 33 variables, 960 observations, and 21 predefined anomalies, each anomaly representing a distinct environment. We construct two versions based on TEP data; \textit{Conf-TEP (w/o Interventions)} uses only the normal time series, where a variable without parents but with multiple children is masked as a latent confounder. \textit{Conf-TEP} further incorporates intervention-induced anomalies across multiple environments. We further evaluate our approach on \textbf{2) the Causal-Rivers dataset}~\citep{stein2025causalrivers}, a large-scale time series causal discovery benchmark constructed from river discharge measurements across Germany. We generated four representative subsets of the Causal-Rivers dataset: \textit{Random}, comprising diverse connected subgraphs that naturally include a mixture of intervention-like effects and confounding patterns; \textit{Confounder}, which simulates latent confounding by removing a parent node to introduce an unobserved common cause; \textit{Flood}, corresponding to periods of extreme rainfall with strong non-stationarity involving 42 nodes in the RiversElbeFlood region; and \textit{No Rain + Flood}, which concatenates a stationary no-rain period and a non-stationary flood period simulate both non-intervened (no-rain) and intervened (flood) environments.

\subsection{Experimental Results and Analysis}\label{exp}
The results in Table~\ref{tb1} indicate that InvarGC consistently achieves the highest AUROC and AUPRC scores, attaining perfect recovery in both linear and non-linear synthetic datasets. This is because the synthetic data, though containing both latent confounders and unknown interventions, remain consistent with the invariance assumptions of InvarGC, enabling precise separation between causal and non-causal edges. While some deep learning–based methods also perform strongly on synthetic settings, their effectiveness is undermined when both confounders and interventions are introduced, as they are not designed to jointly address these challenges. On the more challenging \textit{Conf-TEP w/o Interventions} datasets, all methods experience performance degradation, reflecting the inherent difficulty of causal discovery under hidden confounding and sparse causal structures. Although some baselines achieve relatively high AUROC scores, their low AUPRC values reveal limited ability to recover true causal edges in such sparse settings. Interventions in \textit{Conf-TEP} dataset provide partial improvements by helping distinguish consistent relationships from spurious ones, yet most methods remain ill-suited for the combined challenges of interventions, latent confounding, and heterogeneous subsystem dynamics. Explicitly accounting for such complexity enables InvarGC to discover more reliable causal relationships and consistently outperform all baselines.

\begin{table}[ht]
\caption{Performance Evaluation on Synthetic and TEP Data. \best{Red}: the best, \second{Blue}: the 2nd best.}
\label{tb1}
\vskip 0.15in
\begin{center}
\resizebox{0.95\linewidth}{!}{ 
\begin{small}
\begin{sc}
\begin{tabular}{lcccccccc}
\toprule
\multirow{2}{*}{Methods} 
& \multicolumn{2}{c}{Linear} 
& \multicolumn{2}{c}{Non-Linear} 
& \multicolumn{2}{c}{Conf-TEP (w/o I)} 
& \multicolumn{2}{c}{Conf-TEP} \\
\cmidrule(lr){2-3} \cmidrule(lr){4-5} \cmidrule(lr){6-7} \cmidrule(lr){8-9}
& AUROC & AUPRC & AUROC & AUPRC & AUROC & AUPRC & AUROC & AUPRC \\
\midrule
GC        & 0.6667 & 0.3684 & 0.6111 & 0.3333 & 0.6524 &  0.0877 &  0.6215 & 0.0795 \\
LPCMCI    & 0.7540 & 0.5387 & 0.6032 & 0.4448 & 0.6214 & 0.0821 &  0.6735 & 0.0829 \\
NGC       & 0.8968 & 0.8187 & 0.6349 & 0.6063 & 0.6338 & \second{0.3140} & 0.6773 & \second{0.4100} \\
eSRU      & \second{0.9365} & \second{0.8333} & \second{0.9841} & \second{0.9617} & 0.6656 & 0.2246 & \second{0.7503} & 0.4074 \\
CUTS      & 0.9127 &  0.7997 & 0.8968 & 0.7742 & 0.6716 & 0.1864 & 0.6963 & 0.2884 \\
DyNotears & 0.7817 & 0.5962 & 0.7579 & 0.5925 & \second{0.6883} & 0.2743 & 0.7022 & 0.3359 \\
GVAR      & 0.9206 & 0.8072 & 0.8016 & 0.7087 & 0.6733 & 0.2510 & 0.7112 & 0.2679 \\
JRNGC     & 0.7857 & 0.6445 & 0.8254 & 0.6828 & 0.6098 & 0.1877 & 0.6627 & 0.2620 \\
\textbf{InvarGC} & \best{1.0000} & \best{1.0000} & \best{1.0000} & \best{1.0000} & \best{0.7855} & \best{0.3567} & \best{0.8121} & \best{0.4380} \\
\bottomrule
\end{tabular}
\end{sc}
\end{small}
}
\end{center}
\vskip -0.1in
\end{table}

\begin{table}[ht]
\caption{Performance Evaluation on Causal-Rivers Data. \best{Red}: the best, \second{Blue}: the 2nd best.}
\label{tb2}
\vskip 0.15in
\begin{center}
\resizebox{0.95\linewidth}{!}{
\begin{small}
\begin{sc}
\begin{tabular}{lcccccccc}
\toprule
\multirow{2}{*}{Methods} 
& \multicolumn{2}{c}{Random} 
& \multicolumn{2}{c}{Confounder} 
& \multicolumn{2}{c}{Flood}
& \multicolumn{2}{c}{Norain + Flood} \\
\cmidrule(lr){2-3} \cmidrule(lr){4-5} \cmidrule(lr){6-7} \cmidrule(lr){8-9}
& AUROC & AUPRC & AUROC & AUPRC & AUROC & AUPRC & AUROC & AUPRC \\
\midrule
GC        & 0.5313 & 0.3750 & 0.5079 & 0.4415 & 0.7006 & 0.0847 & 0.7122 & 0.0940 \\
LPCMCI    & 0.5625 & 0.4381 & 0.5714 & 0.6095 & 0.7683 & 0.5631 & 0.6954 & 0.5456 \\
NGC       & 0.6042 & 0.5156 & 0.4762 & 0.5441 & \second{0.7747} & \best{0.5814} & 0.7627 & \second{0.5736} \\
eSRU      & 0.7778 & 0.7671 & \second{0.8571} & 0.8429 & 0.6925 & 0.3606 & 0.7627 & 0.3435 \\
CUTS      & 0.8194 & 0.8267 & 0.8254 & \second{0.8691} & 0.7247 & 0.3362 & 0.7513 & 0.4513 \\
DyNotears & 0.8056 & 0.8313 & 0.7143 & 0.6301 & 0.7623 & 0.5599 & \second{0.7796} & 0.5701 \\
GVAR      & \second{0.9028} & 0.8817 & 0.6825 & 0.6372 & 0.6602 & 0.2949 & 0.6892 & 0.2983 \\
JRNGC     & 0.8958 & \best{0.9062} & 0.6508 & 0.6810 & 0.7414 & 0.3914 & 0.7062 & 0.2508 \\
\textbf{InvarGC} & \best{0.9097} & \second{0.9038} & \best{0.9048} & \best{0.9341} & \best{0.7788} & \second{0.5750} & \best{0.7832} & \best{0.5839} \\
\bottomrule
\end{tabular}
\end{sc}
\end{small}
}
\end{center}
\vskip -0.1in
\end{table}

The results in Table~\ref{tb2} show that InvarGC achieves the best overall performance across subsets of the Causal-Rivers benchmark, consistently ranking among the top two in both AUROC and AUPRC. In the \textit{Random} subset, its performance is comparable to the strongest baselines, whereas in the \textit{Confounder} subset it clearly outperforms all competitors, underscoring its robustness to hidden confounding. In the \textit{Flood} and \textit{No Rain + Flood} subsets, which correspond to highly non-stationary and heterogeneous environments, InvarGC again delivers the most reliable performance. These findings confirm that invariance-based modeling is well suited to handle real-world complexities such as interventions, latent confounders, and distributional shifts. While several methods remain competitive in the \textit{Random} subset, the \textit{Confounder} subset increases the effect of hidden confounding, causing most methods to degrade. Moreover, results from the \textit{Flood} and \textit{No Rain + Flood} subsets indicate that some methods can take advantage of pronounced distributional shifts, since such variability offers valuable information for uncovering causal structure. 

\subsection{Ablation Study}
In this section, we present an ablation study from three perspectives. First, we evaluate the contribution of the latent confounder inference module (LCIM). Second, we conduct a sensitivity analysis of $L$ in Eq.(\ref{eq:nl}) with respect to the true number of latent confounders $|Z|$, and examine the consequences when $|Z| > L$. Finally, we assess the impact of the intervention identification network.

\begin{figure}[ht]
    \centering
    \includegraphics[width=\linewidth]{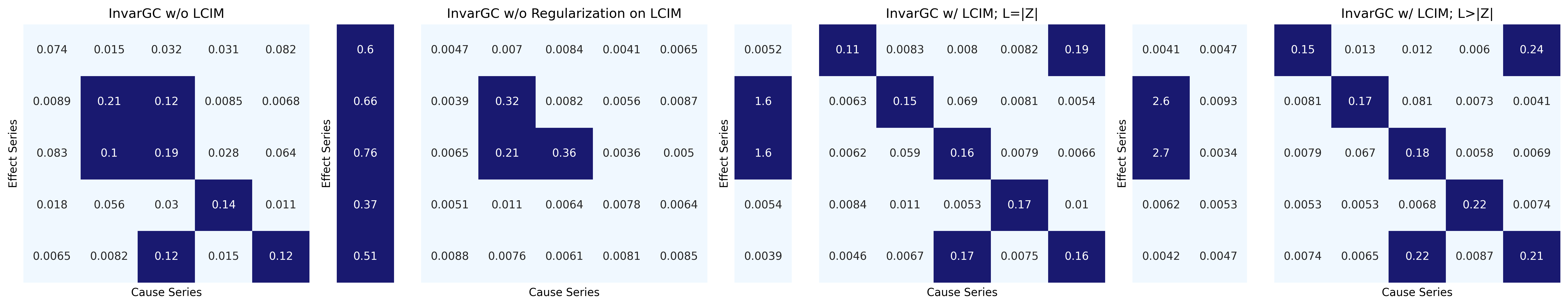}
    \caption{Ablation study of the latent confounder inference module (LCIM). We compare four settings: (i) w/o LCIM, (ii) w/ LCIM but without regularization, (iii) w/ LCIM with $L$ set to the true number of latent confounders ($L=|Z|$), and (iv) w/ LCIM with $L > |Z|$.}
    \label{fig:abl_lcim}
\end{figure}

We first evaluate the effectiveness of LCIM. Without explicitly modeling latent confounders, the estimated causal structure contains spurious correlations. With the regularization term in Eq.~(\ref{eq:nl}), LCIM achieves substantially better performance by effectively absorbing the influence of confounders, therefore reducing spurious associations and enabling more accurate recovery of the true Granger causal graph. We also conduct a sensitivity analysis on the parameter $L$. Since the true number of confounders $Z$ is unknown in real-world applications, we examine the cases where $L \geq |Z|$, with a properly chosen $\lambda_z$, the model can still accuratelly recover both the latent confounders and the underlying causal structures, as illustrated in Figure~\ref{fig:abl_lcim}. On the other hand, when $|Z| > L$, the model can only partially absorb the confounding effects, leaving residual spurious correlations and resulting in biased causal graph estimation, which is consistent with the theoretical limits of identifiability under model misspecification.

\begin{figure}[ht]
    \centering
    \includegraphics[width=\linewidth]{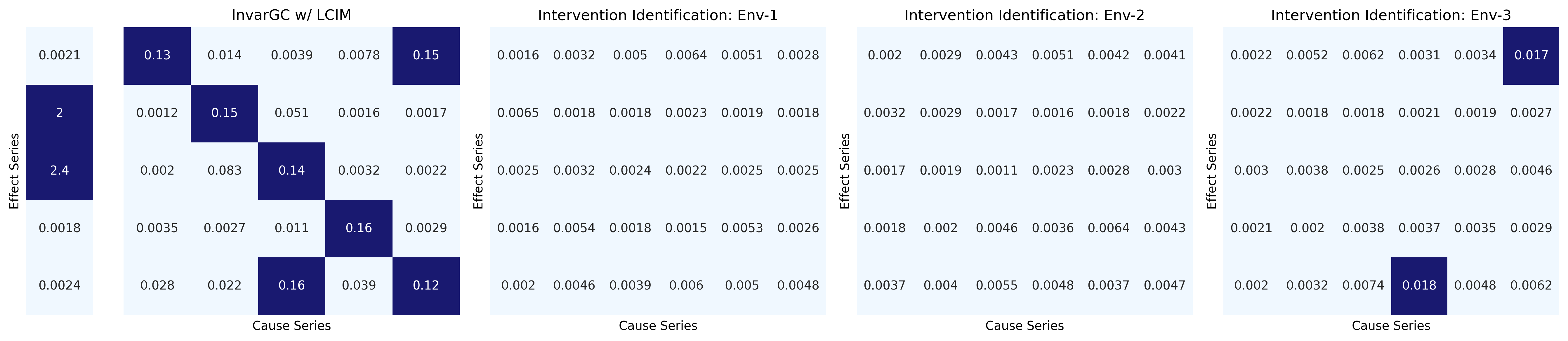}
    \caption{Recovered Granger causality and intervention identification across three environments.}
    \label{fig:abl_itv}
\end{figure}

We finally assess the role of the intervention identification network, specifically evaluating edge-level intervention identification within each environment in the presence of latent confounders and without access to intervention labels (Figure~\ref{fig:abl_itv}). Our method not only distinguishes intervened from non-intervened environments but also pinpoints the specific causal edges responsible for the interventions. We observe that the method tends to prioritize strong interventions while overlooking milder ones, which underscores the need to set an appropriate detection threshold in practice. In our experiments, we set the threshold based on the order of magnitude of the model weights.

\section{Conclusion}
In this paper, we introduce InvarGC, a method for inferring invariant Granger causal graph from heterogeneous interventional time series data under latent confounding even when interventional targets are unknown. We further establish identifiability of the proposed approach under appropriate assumptions. Experiments on synthetic and real-world datasets demonstrate its effectiveness. For future work, we are going to leverage the learned causal graphs in real-world applications such as domain adaptation, anomaly detection, and time series forecasting.

\bibliography{iclr2026_conference}
\bibliographystyle{iclr2026_conference}

\appendix
\section{Appendix}
\subsection{Proofs and Discussion}\label{id-proofs}
This appendix provides the detailed proofs for the identifiability theorems presented in the Section~\ref{id-main}. We first establish two key lemmas and then use them to prove the main results for the identifiability of the Granger causal graph, latent confounder subspace, and interventions. We also discussed the limitations of our theoretical analysis and directions for future work.

\begin{lemma}[Blocking Backdoor Paths]
\label{lem:block}
Under Assumption~\ref{assump:a3}, conditioning on the true parent set $\mathrm{PA}(X_{t+1}^i)$ and the true latent confounders $Z^\star_{t}$ blocks all backdoor paths into $X_{t+1}^i$.
\end{lemma}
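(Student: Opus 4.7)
The plan is to reduce the lemma to the classical parent-adjustment property of d-separation applied to the time-unrolled DAG induced by Eq.~(\ref{eq:data_gen}). First, I will read off the full parent set of $X^i_{t+1}$ in that DAG: from the structural equation for $X_{t+1}$, the parents of $X^i_{t+1}$ consist of the observed set $\mathrm{PA}(X^i_{t+1}) \subseteq X_t$ together with the components of the latent state $Z^\star_t$ that have a non-zero loading in $W_{0,X_{t+1}Z_t}$. Assumption~\ref{assump:a3} is what makes this parent set cleanly defined: since $W_{0,Z_{t+1}X_t}=0$, the latent process is exogenous with $Z^\star_t$ having only $Z^\star_{t-1}$ as its own parent, so no arrow from the observed past enters $Z^\star_t$, and the unrolled graph is a temporally ordered DAG with edges pointing only forward in time.

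Next, I will set $S = \mathrm{PA}(X^i_{t+1}) \cup \{Z^\star_t\}$ and verify that $S$ blocks every backdoor path into $X^i_{t+1}$. By definition, any such path ends with an arrow into $X^i_{t+1}$ and therefore enters through a parent, i.e., through some node in $S$. That entering parent is a non-collider on the path, appearing either as a chain ($\cdots \to P \to X^i_{t+1}$) or as a fork ($\cdots \leftarrow P \to X^i_{t+1}$); conditioning on $P$ then blocks the path by d-separation.

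The main obstacle is to rule out the possibility that conditioning on $S$ inadvertently opens a collider elsewhere on some backdoor path. I plan to dispatch this via temporal ordering: every node in $S$ lives at time $t$, while all descendants of $X^i_{t+1}$ live at time $\ge t+1$, so no element of $S$ is a descendant of $X^i_{t+1}$ and therefore cannot coincide with, or be an ancestor of, any collider on a backdoor path. This is precisely the second clause of the backdoor criterion, and together with the blocking argument above it shows that $S$ satisfies the criterion relative to any candidate predictor of $X^i_{t+1}$, which is the statement of the lemma. One minor point I will flag in the write-up is that $Z^\star_t$ is unobserved, so the lemma should be read as an idealized population statement, later operationalized through the latent-subspace recovery guaranteed by Theorem~\ref{thm:z-id}.
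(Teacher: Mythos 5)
Your proposal is correct and is, at bottom, the same one-step d-separation argument the paper gives, but you execute it via a cleaner and more standard route. The paper's proof leans on the claim that, by Assumption~(A3), any backdoor path into $X^i_{t+1}$ must pass through $Z^\star_t$ --- which is not literally true, since a backdoor path can re-enter $X^i_{t+1}$ through an \emph{observed} parent at time $t$ without ever touching the latent state (e.g.\ $X^j_t \leftarrow X^l_{t-1} \to X^m_t \to X^i_{t+1}$). Your entering-parent argument handles both cases uniformly: every path ending with an arrow into $X^i_{t+1}$ passes through a member of the full parent set $\mathrm{PA}(X^i_{t+1}) \cup \{Z^\star_t\}$, that node is necessarily a non-collider on the path (the final edge points out of it), and it is conditioned on, so the path is blocked. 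This is the classical parent-adjustment reasoning, and it isolates the actual role of (A3): guaranteeing that the latent parents of $X^i_{t+1}$ are exactly components of $Z^\star_t$ and that $Z^\star_t$ is not itself a descendant of the observed past. Two small remarks. First, your worry about "inadvertently opening a collider elsewhere" is moot: once a path contains a conditioned non-collider it is blocked regardless of the status of colliders elsewhere on that same path, so the temporal-ordering clause is belt-and-suspenders rather than load-bearing (it is, however, exactly what one would cite to verify the descendant clause of the backdoor criterion, which is good hygiene). Second, within that clause the direction is slightly muddled: a collider is activated when a conditioned node is a \emph{descendant} of the collider, not an ancestor of it; since the clause is superfluous here, this does not affect correctness.
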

\begin{proof}
Assumption~\ref{assump:a3} ensures that there are no causal paths of the form $X \to Z^\star$. Any remaining backdoor from $X_t^j$ to $X_{t+1}^i$ must then pass through $Z_t^\star$, so conditioning on $\mathrm{PA}(X^i_{t+1})$ and $Z_t^\star$ blocks all backdoor paths into $X_{t+1}^i$.
\end{proof}

\begin{lemma}[Uniqueness of Minimal Invariant Set]
\label{lem:mininv}
Under Assumption \ref{assump:a2}-\ref{assump:a4}, the unique minimal set of observed variables that renders the conditional distribution of $X_{t+1}^i$ invariant across environments (when conditioned together with $Z^\star_{t}$) is the true parent set $\mathrm{PA}(X^i_{t+1})$.
\end{lemma}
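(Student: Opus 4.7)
The plan is to prove the characterization of $\mathrm{PA}(X^i_{t+1})$ as the unique minimal invariant set in three steps (sufficiency, necessity of every true parent, and minimality/uniqueness), following the logic of invariant causal prediction~\citep{peters2016causal} but adapted to the time-series, confounded, edge-intervention setting of the paper.

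For sufficiency, I would appeal directly to Lemma~\ref{lem:block}: conditioning on $\mathrm{PA}(X^i_{t+1})$ together with $Z^\star_{t}$ blocks every backdoor into $X^i_{t+1}$, so any residual environment-induced dependence in $P_k(X^i_{t+1}\mid \mathrm{PA}(X^i_{t+1}), Z^\star_t)$ must enter through direct interventions on incoming edges $X^j_t\to X^i_{t+1}$. By A2 these interventions are modular and preserve the support of the mechanism, and by A3 the latent-to-observed map is non-intervenable and environment-invariant. Consequently the invariant component of the structural model for $X^i_{t+1}$ is a function of $\mathrm{PA}(X^i_{t+1})$ and $Z^\star_t$ alone, identical across $k$, which shows that this set is large enough to render the invariant conditional law well-defined and common across environments.

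For necessity, I would suppose that some $X^{j^\star}_t\in\mathrm{PA}(X^i_{t+1})$ is dropped and we condition on $S=\mathrm{PA}(X^i_{t+1})\setminus\{X^{j^\star}_t\}$ together with $Z^\star_t$. By the diversity part of A4 there is a pair of environments in which the conditional law of $X^{j^\star}_t$ given $S$ differs — either because $X^{j^\star}_t$ (or an ancestor) is itself an intervention target, or because of a propagated shift in one of its other parents. Marginalizing the structural equation for $X^i_{t+1}$ over $X^{j^\star}_t$ transports this environment-dependent shift into $P_k(X^i_{t+1}\mid S, Z^\star_t)$; faithfulness (A4) rules out exact cancellation between the contribution of $X^{j^\star}_t$ and that of the retained variables, so the shift is detectable and invariance is broken. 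Adding any $X^j_t\notin\mathrm{PA}(X^i_{t+1})$ to $\mathrm{PA}(X^i_{t+1})$ preserves invariance but violates minimality, and if two distinct minimal invariant sets $S_1\neq S_2$ existed, a standard intersection argument (valid under faithfulness, which prevents spurious invariance by cancellation) would yield a strictly smaller invariant set, contradicting minimality; hence the minimal invariant set is unique and equal to $\mathrm{PA}(X^i_{t+1})$.

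The main obstacle will be formalizing the ``sufficient diversity of interventions'' clause in A4 so that, for every true parent $X^{j^\star}_t$, the family of available environments actually induces a shift in its conditional law given $S$ that cannot be absorbed by the retained variables. This is the analogue of the well-known identifiability condition in ICP, and it is especially delicate when $X^{j^\star}_t$ is itself an intervention target in many environments: the argument then has to isolate a subfamily of environments in which the invariant $X^{j^\star}_t\!\to\!X^i_{t+1}$ mechanism is exercised while some exogenous source of variation in $X^{j^\star}_t$ remains active, so that the $j^\star$-contribution is both present and distinguishable.
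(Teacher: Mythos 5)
Your proposal is correct and follows the same ICP-style template as the paper's proof: sufficiency via Lemma~\ref{lem:block} and Assumptions~\ref{assump:a2}--\ref{assump:a3}, necessity of each true parent via the diversity clause of Assumption~\ref{assump:a4} plus faithfulness, and minimality by noting that non-parents add no predictive content once the parents and $Z^\star_t$ are conditioned on. The one substantive divergence is in the necessity step: the paper argues that omitting $X^{j}_t$ breaks invariance because some environment intervenes on the edge $j\to i$ itself, so the mechanism of $X^i_{t+1}$ given the reduced set visibly changes there; you instead argue that the conditional law of the omitted parent $X^{j^\star}_t$ given the retained set shifts across environments, and that marginalizing the structural equation over $X^{j^\star}_t$ transports this shift into $P_k(X^i_{t+1}\mid S, Z^\star_t)$, with faithfulness excluding cancellation. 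Your version is the classical ICP mechanism and is arguably cleaner here, since an intervention on the edge $j\to i$ would also perturb the conditional given the \emph{full} parent set and so does not by itself separate $S$ from $\mathrm{PA}(X^i_{t+1})$; your route avoids that tension. You also add an explicit intersection argument for uniqueness of the minimal set, which the paper omits (it implicitly gets uniqueness from ``every invariant set must contain all parents''). Both proofs share the same unresolved weak point, which you correctly flag: neither formalizes what ``sufficiently diverse interventions'' in Assumption~\ref{assump:a4} must guarantee so that the induced shift in the $j^\star$-contribution cannot be absorbed by the retained variables or by a non-parent proxy.
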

\begin{proof}
\textit{(Sufficiency)} By Lemma~\ref{lem:block}, conditioning on $\mathrm{PA}(X^i_{t+1})$ and $Z_{t}^\star$ blocks all backdoor paths. The remaining direct causal mechanism from $Z_{t}^\star$ to $X_{t+1}^i$ is invariant by Assumption~\ref{assump:a3}. The full set of parents is thus sufficient for invariance by Assumption~\ref{assump:a2}.

\textit{(Minimality)} If a true parent $X^j_{t} \in \mathrm{PA}(X^i_{t+1})$ is omitted, Assumption~\ref{assump:a4} ensures the environments provide sufficiently diverse interventions such that there exists at least one environment in which the mechanism of the edge $j \to i$ differs from its invariant form. In that environment, the conditional distribution of $X_{t+1}^i$ given the reduced set of variables changes relative to other environments, thereby violating invariance. 
Conversely, if a non-parent variable is included, it does not contribute to predicting $X_{t+1}^i$ once the true parents are already conditioned on, so the set fails to be minimal. 
Hence the observed parent set $\mathrm{PA}(X^i_{t+1})$ is the unique minimal invariant predictor set among observed variables, in line with the principle of Invariant Causal Prediction~\citep{peters2016causal}.
\end{proof}

\begin{theorem_app}[Identifiability of Granger Causal Graph]
Assume that the following conditions hold:
\begin{itemize}
    \item[\textbf{(A1)}] The data are generated from the structural model in Eq.(\ref{eq:data_gen}) across $N$ environments.
    \item[\textbf{(A2)}] The Granger causal graph structure encoded by the support of $W_{0,X_{t+1}X_{t}}$ is invariant across all environments. Modular interventions target only $X_{t}\!\to\!X_{t+1}$ edges.
    \item[\textbf{(A3)}] Latent confounders are exogenous, i.e., $W_{0,Z_{t+1}X_{t}}=0$, meaning past observed variables do not cause future latent changes. Moreover, the latent-to-observed mechanism $Z_{t}\!\to\!X_{t+1}$ is non-intervenable and its parameters are invariant across environments.
    \item[\textbf{(A4)}] The model satisfies faithfulness (no parameter cancellations) and interventions are sufficiently diverse to distinguish true causal parents from non-parents.
\end{itemize}
Then, for each variable $X_{t+1}^i$, its Granger causal parent set $\mathrm{PA}(X_{t+1}^i) \subseteq \{X^1_{t}, \dots, X^d_{t}\}$ is the unique minimal predictor set from the observed variables that remains invariant across all environments. At the population optimum of the objective in Eq.(\ref{eq:ll}), the proposed model recovers the true Granger causal graph among the observed variables by restricting its discovered dependencies to that set:
\[
\{\, j \in \{1, \dots, d\}:\ \|(W_{0,j}^i, W_{1,j}^i, \dots, W_{N,j}^i)\|_2 > 0 \,\} = \mathrm{PA}(X^i_{t+1}).
\]
\end{theorem_app}

\begin{proof}
Consider a fixed target $X^i_{t+1}$, for each candidate predictor $X^j_t\in\{X^1_t,\dots,X^d_t\}$, write the $(N\!+\!1)$-dimensional group
$\mathcal{G}_{i,j}\!:=\!(W_{0,j}^i,\dots,W_{N,j}^i)$.
Let $\mathcal{L}_i(W,Z)$ denote the population objective in Eq.~(\ref{eq:ll}) restricted to the $i$-th equation, i.e.,
\[
\mathcal{L}_i(W,Z)
= \mathcal{R}_i(W,Z) 
+ (1-\alpha)\sum_{j=1}^{d+p}\|\mathcal{G}_{i,j}\|_2
+ \alpha\sum_{k=1}^{N}\sum_{j=1}^{d}\|W_{k,j}^i\|_2
+ \lambda_z \,\mathcal{R}_Z(Z),
\]
where $\mathcal{R}_i(W,Z)$ is the population squared risk for predicting $X^i_{t+1}$, and
\[
\mathcal{R}_Z(Z)
:= \sum_{k=1}^N \sum_{l=1}^L \sqrt{\tfrac{1}{T}\sum_{t=1}^T Z^2_{k,l,t}}
\]
is the latent confounder regularization term.

\textit{(i) Non-parents are excluded.}
Take any non-parent $X_{t}^j\notin \mathrm{PA}(X^i_{t+1})$.
By Lemma~\ref{lem:mininv} and Assumptions~\ref{assump:a2}--\ref{assump:a4},
the population conditional mean of $X^i_{t+1}$ given the true conditioning set
$\{\mathrm{PA}(X^i_{t+1}),Z_t^\star\}$ does not depend on $X_t^j$ and is invariant across environments.
Equivalently, in the population normal equations for the linear objective,
the cross-covariance between the residual (after projecting onto the true set) and $X_t^j$ is zero in every environment,
so the unique risk minimizer in the $j$-th direction is $\mathcal{G}_{i,j}=0$:
\[
\inf_{v\in\mathbb{R}^{N+1}} \mathcal{R}_i\big(\mathcal{G}_{i,j}=v\big) \;=\; \mathcal{R}_i\big(\mathcal{G}_{i,j}=0\big),
\quad\text{with equality iff } v=0.
\]
Adding any $v\neq 0$ cannot reduce $\mathcal{R}_i$ and strictly increases the group penalty terms
$(1-\alpha)\|v\|_2+\alpha\sum_k\|v_k\|_2$.
Hence for any non-parent,
\[
\mathcal{L}_i\big(\mathcal{G}_{i,j}=v\big)\;>\;\mathcal{L}_i\big(\mathcal{G}_{i,j}=0\big)\quad (v\neq 0),
\]
and the population minimizer sets the entire group to zero.

\textit{(ii) True parents are retained.}
Now take any true parent $X_t^j\in\mathrm{PA}(X^i_{t+1})$.
By Lemma~\ref{lem:mininv} and Assumption~\ref{assump:a4},
there exists at least one environment in which the mechanism on the edge $j\!\to\! i$ differs from the invariant value.
If we force $\mathcal{G}_{i,j}=0$ (equivalently, omit $X_t^j$), the best achievable population risk is strictly larger than the oracle risk attained when $\mathcal{G}_{i,j}$ is free:
\[
\Delta_{i,j}\;:=\;\inf_{\substack{W:~\mathcal{G}_{i,j}=0}}\! \mathcal{R}_i(W,Z^\star)\;-\;\inf_{W}\! \mathcal{R}_i(W,Z^\star)\;>\;0.
\]
Let $\mathcal{G}^\star_{i,j}$ be any population risk minimizer without the restriction $\mathcal{G}_{i,j}=0$.
For regularization parameters in a standard non-degenerate range (i.e., small enough that
$(1-\alpha)\|\mathcal{G}^\star_{i,j}\|_2+\alpha\sum_k\|W_{k,j}^{i\star}\|_2 < \Delta_{i,j}$),
we have
\begin{align*}
\mathcal{L}_i(\mathcal{G}_{i,j}=0,Z^\star) 
&= \inf_{\substack{W:\,\mathcal{G}_{i,j}=0}} \mathcal{R}_i(W,Z^\star) \\
&\quad + (1-\alpha)\sum_{\ell\neq j}^{d+p}\|\mathcal{G}_{i,\ell}\|_2
      + \alpha\sum_{k=1}^{N}\sum_{\ell\neq j}^{d}\|W_{k,\ell}^i\|_2
      + \lambda_z \mathcal{R}_Z(Z^\star) \\
&>\; \mathcal{R}_i(W^\star,Z^\star) \\
&\quad + (1-\alpha)\sum_{\ell=1}^{d+p}\|\mathcal{G}_{i,\ell}^\star\|_2
      + \alpha\sum_{k=1}^{N}\sum_{\ell=1}^{d}\|W_{k,\ell}^{i\star}\|_2
      + \lambda_z \mathcal{R}_Z(Z^\star) \\
&=\; \mathcal{L}_i(W^\star,Z^\star).
\end{align*}
so zeroing the entire group increases the objective. Therefore the optimizer must retain $\mathcal{G}_{i,j}\neq 0$ for every true parent. 

Combining (i) and (ii), the set of indices with nonzero groups satisfies
\[
\bigl\{\, X^j_{t}:\ \|\mathcal{G}_{i,j}\|_2>0\,\bigr\}\;=\;\mathrm{PA}(X^i_{t+1}),
\]
which proves the claim.
\end{proof}

\begin{theorem_app}[Identifiability of the Latent Confounder Subspace]
With the same assumptions of Theorem~\ref{thm:g-id}, let $Z^\star_t \in \mathbb{R}^r$ be the ground-truth latent process, which has environment-invariant dynamics but environment-dependent marginal distributions. The data generation process for the observed variables is given by:
\[
X_{t+1} = f_{X}(W_{X_{t+1}X_{t}}^{\top}X_{t} + W_{X_{t+1}Z_{t}}^{\top}Z_{t}^\star)+\epsilon_{t+1}^{X},
\]
where the influence of the latent variables is non-degenerate, i.e., the Jacobian $\tfrac{\partial f_X}{\partial z}$ has full rank $r$ on a set of positive measure. Let the latent process $Z_t \in \mathbb{R}^p$ learned by our model by minimizing the objective in Eq.~(\ref{eq:ll}) have dimension $p \ge r$. Then, at the population optimum, the learned latent subspace is equivalent to the ground-truth latent subspace up to an invertible linear transformation. That is, there exists an invertible matrix $R\in\mathbb{R}^{r\times r}$ such that:
\[
\mathrm{span}(Z_{t}) = \mathrm{span}(Z^\star_{t}).
\]
\end{theorem_app}
\begin{proof}
The proof rests on three points. \textit{(i) Equivalence under invertible reparametrization.}
For any invertible $R\in\mathbb{R}^{r\times r}$, the reparametrization
$Z_t^\star \mapsto Z_t^{\prime\star}:=R^\top Z_t^\star$ and
$W_{X_{t+1}Z_t} \mapsto W'_{X_{t+1}Z_t}:=R^{-1}W_{X_{t+1}Z_t}$
yields the same conditional mean,
\[
f_X\!\big(W_{X_{t+1}X_t}^\top X_t + {W'}_{X_{t+1}Z_t}^\top Z_t^{\prime\star}\big)
= f_X\!\big(W_{X_{t+1}X_t}^\top X_t + W_{X_{t+1}Z_t}^\top Z_t^\star\big).
\]
Hence only the latent subspace $\mathrm{span}(Z_t^\star)$ is identifiable.

\textit{(ii) Extraneous latent directions are eliminated.}
Write any learned $Z_t\in\mathbb{R}^p$ as
\[
Z_t \;=\; A^\top Z_t^\star \;+\; U_t,\qquad A\in\mathbb{R}^{r\times p},\quad
U_t \;\perp\; \mathrm{span}(Z_t^\star).
\]
Let $\mathcal{L}(W,Z)$ be the population objective in Eq.~(\ref{eq:ll}) and
$\mathcal{R}_Z(Z)$ the latent regularizer already defined in the text.
For any fixed $i$, denote by $\mathcal{R}_i(W,Z)$ the population squared risk for predicting $X^i_{t+1}$.
Because $U_t$ lies outside $\mathrm{span}(Z_t^\star)$ and $\tfrac{\partial f_X}{\partial z}$ has full column rank $r$,
the population normal equations (after conditioning on $\{X_t,Z_t^\star\}$ per Lemma~\ref{lem:block})
imply that the cross-covariance between the residual and $U_t$ is zero in every environment.
Therefore, along any coefficient direction attached to $U_t$, the risk $\mathcal{R}_i$ is uniquely minimized at zero.
Adding any nonzero weight on $U_t$ cannot reduce $\mathcal{R}_i$ and strictly increases
the penalty $\lambda_z \mathcal{R}_Z(Z)$ (and, if parameterized, the corresponding weight penalties).
Thus, at the population optimum, the coefficients on $U_t$ must vanish, and the effective learned latent variables
lie in $\mathrm{span}(Z_t^\star)$. Formally, for each target $i$ and any coefficient block $b$ attached to $U_t$,
\[
\inf_{v}\, \mathcal{R}_i(\text{coeff}(U_t)=v)\;=\;\mathcal{R}_i(\text{coeff}(U_t)=0),
\quad\text{with equality iff } v=0,
\]
so $\mathcal{L}$ is strictly larger whenever $v\neq 0$. Hence extraneous directions are eliminated at the optimum.

\textit{(iii) Missing true latent directions strictly increase risk.}
Suppose, towards a contradiction, that the learned latent subspace has dimension $m<r$
(or, more generally, that $\mathrm{span}(Z_t)$ fails to contain $\mathrm{span}(Z_t^\star)$).
Then there exists a nonzero direction $a\in\mathbb{R}^r$ such that $a^\top Z_t^\star$ is orthogonal to $\mathrm{span}(Z_t)$.
Because $\tfrac{\partial f_X}{\partial z}$ has full column rank $r$, variation along $a^\top Z_t^\star$ induces a
nondegenerate change in the conditional mean of $X_{t+1}$, and by Assumption~\ref{assump:a4} there exists an environment in which the mechanism along that direction differs from the invariant one.
Since $a^\top Z_t^\star$ is not representable within $\mathrm{span}(Z_t)$, the resulting environment-induced variation
cannot be absorbed by the model; hence the conditional law of $X_{t+1}$ given $(X_t,Z_t)$ varies across environments,
violating invariance and yielding a strictly larger population risk than the oracle that uses the full $Z_t^\star$:
\[
\Delta \;:=\; \inf_{W}\, \mathcal{R}(W,Z)\;-\;\inf_{W}\, \mathcal{R}(W,Z^\star) \;>\; 0.
\]
For regularization parameters in a standard non-degenerate range, the increase $\Delta$ dominates any penalty saving,
so the population optimum cannot occur with $m<r$ (or with $\mathrm{span}(Z_t)$ missing part of $\mathrm{span}(Z_t^\star)$).

Combining (ii) and (iii), at any population optimum with $p\ge r$ the learned latent variables $Z_t$
span exactly $\mathrm{span}(Z_t^\star)$; together with (i) this proves subspace identifiability up to an invertible
linear transformation.
\end{proof}

\begin{theorem_app}[Identifiability of Edge-Level Interventions]
Assume Theorems~\ref{thm:g-id} and~\ref{thm:z-id} hold. For any observed edge $j \to i$ and any environment $k$,
\[
W^i_{k,j}\neq 0
\quad \Longleftrightarrow \quad
\text{the edge $j \to i$ is intervened in environment $k$}.
\]
\end{theorem_app}
\begin{proof}
For each environment $k$, the total mechanism on edge $j\to i$ is
\[
\theta_{k,j}^i \;=\; W_{0,j}^i + W_{k,j}^i,
\]
where $W_{0,j}^i$ is the invariant coefficient and $W_{k,j}^i$ is the environment-specific deviation.

\textit{($\Rightarrow$) If the edge is intervened.}
By Assumption~\ref{assump:a2}, an intervention on $j\to i$ in environment $k$ implies that the true coefficient
$\theta_{k,j}^{i,\text{true}}$ differs from the invariant value $W_{0,j}^i$.
Because the invariant mechanism and other edges cannot absorb this discrepancy (faithfulness, Assumption~\ref{assump:a4}),
the risk-minimizing solution must set $W_{k,j}^i\neq 0$ to match the true mechanism.

\textit{($\Leftarrow$) If the edge is not intervened.}
Then the true coefficient equals the invariant value $W_{0,j}^i$.
Any solution with $W_{k,j}^i\neq 0$ necessarily mis-specifies the coefficient, increasing the population risk,
and also incurs a positive penalty term $\alpha\|W_{k,j}^i\|_2$.
Thus the optimal solution satisfies $W_{k,j}^i=0$.
\end{proof}

\begin{corollary_app}[Identifiability of Node-level Interventions]
Under the setting of Theorem~\ref{thm:i-id}, a node-level intervention on variable $X^i_{t+1}$ in environment $k$ is identifiable. Specifically,
\[
\{\, X^j_{t} : W^i_{k,j}\neq 0 \,\} \;=\; \mathrm{PA}(X^i_{t+1}) 
\quad \Longleftrightarrow \quad
\text{node $i$ is intervened in environment $k$},
\]
\end{corollary_app}
\begin{proof}
Under the setting of Theorem~\ref{thm:i-id}, a node-level intervention on $X^i_{t+1}$ in environment $k$ is defined as an intervention on \emph{all} of its incoming parent edges. 
Equivalently, this means that for every $X^j_{t} \in \mathrm{PA}(X^i_{t+1})$ the deviation parameter $W_{k,j}^i \neq 0$. Therefore, node-level interventions are identifiable under this setting.
\end{proof}

Our identifiability results are established in a linear setting and with a single time lag ($t\to t+1$). This is justified by the assumption that $X_t$ already summarizes the relevant history, so one-step dynamics suffice for Granger causal identification. The InvarGC model employs neural networks to capture non-linear mechanisms. Our theoretical analysis thus provides a foundation for extending this framework to more general non-linear settings. 

Future work can also integrate Generalized-Lasso and $\ell_0$ sparsity methods into Granger causal discovery, along with advanced screening techniques (\cite{ren2017safe,ren2020thunder,ren2022best}), which promises to significantly boost efficiency and precision by enabling a more accurate selection of causal predictors, especially in high-dimensional time series.

\subsection{Dataset Construction Details}\label{dataset}
As illustrated in Figure~\ref{fig:gt}, during the synthetic time series data generation process we masked a variable that has no parents but influences two other observed variables, treating it as a latent confounder. The latent structure created by this masking is highlighted with a red rectangle. This induces spurious correlations in the observational data, shown with a red background, while the remaining blue without blurring variables are treated as ground truth. We further simulated unknown interventions by randomly perturbing edge weights, which are blurred in blue. 

For the real-world Tennessee Eastman Process (TEP) dataset, we followed the same assumptions and applied the masking strategy to construct a Confounded TEP (Conf-TEP) dataset.
\begin{figure}[ht]
    \centering
    \includegraphics[width=0.95\linewidth]{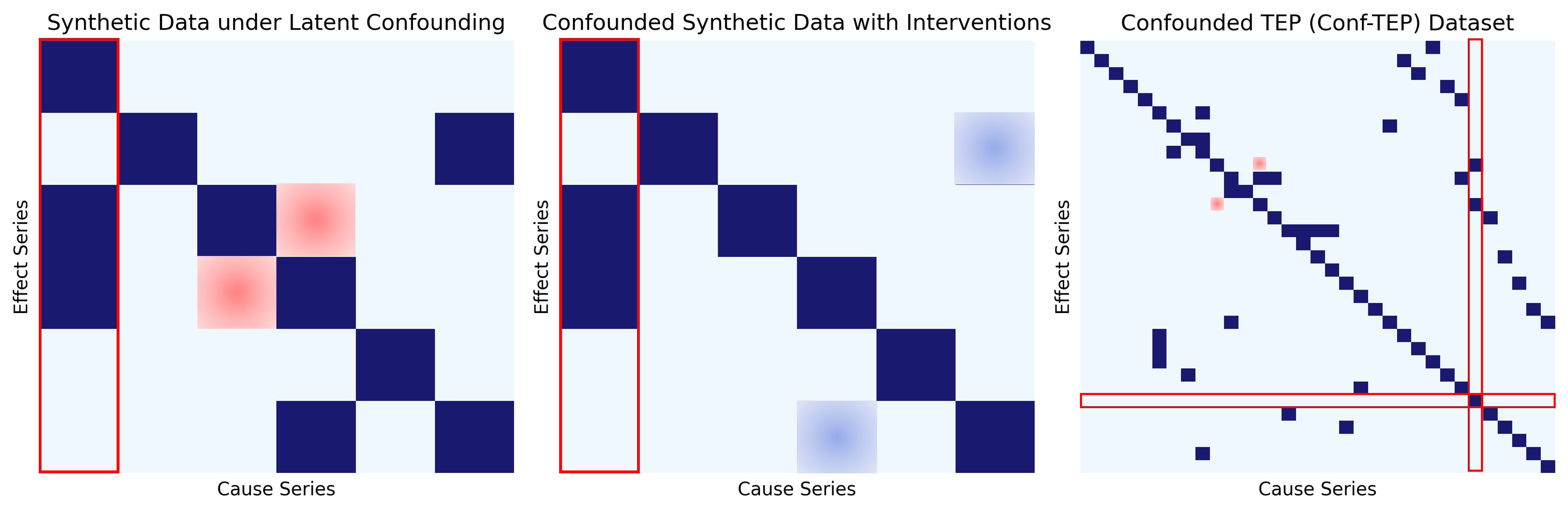}
    \caption{Illustration of latent confounding and interventions in synthetic and Conf-TEP data.}
    \label{fig:gt}
\end{figure}

To construct the confounded dataset from the Causal-Rivers data, we selected six subsets whose causal graphs are identical, consistent with the assumptions made in our study. Each subset contains five observable variables. Following our masking strategy, we masked one variable that has no parents but influences multiple children, thereby introducing latent confounding. For the \textit{Flood} dataset, we extracted 1,000 consecutive data points from the period 2024-09-09 to 2024-10-10. In addition, we used two equal-length windows from the preceding no-rain period to construct a mixed \textit{NoRain+Flood} dataset. 

In the evaluation process, we use AUROC and AUPRC, for both metrics, the input consists of a one-dimensional vector of predicted edge scores obtained by flattening the edge-score matrix and a matching one-dimensional binary label vector obtained by flattening the ground-truth adjacency.

\end{document}